\title{Inferring Implicit Goals Across Differing Task Models}
\author{
Silvia Tulli$^1$
\and
Stylianos Loukas Vasileiou$^2$\and
Mohamed Chetouani$^{1}$\And
Sarath Sreedharan$^4$\\
\affiliations
$^1$Institute of Intelligent Systems and Robotics (ISIR) - CNRS - INSERM - Sorbonne University\\
$^2$McKelvey School of Engineering at Washington University in St. Louis\\
$^3$Department of Computer Science at Colorado State University\\
\emails
silvia.tulli@sorbonne-universite.fr,
v.stylianos@wustl.edu,
mohamed.chetouani@sorbonne-universite.fr,
sarath.sreedharan@colostate.edu
}
\newcommand{\BibTeX}{\rm B\kern-.05em{\sc i\kern-.025em b}\kern-.08em\TeX}
\theoremstyle{definition}
\newtheorem{defn}{Definition}
\newtheorem{prop}{Proposition}
\begin{document}
\newcommand{\querycost}[1]    {\ensuremath{\mathcal{C}(#1)}}
\newcommand{\expquerycost}[1]    {\ensuremath{\mathcal{Q}(#1)}}
\newcommand{\knownsubgoals}{\ensuremath{\mathcal{K}_{\mathcal{I}}}}
\newcommand{\queryvalue}[1]    {\ensuremath{\mathcal{V}(#1)}}
\newcommand{\queryprobin}[2]    {\ensuremath{\mathcal{P}(#1 \in #2)}}
\newcommand{\queryprobnotin}[2]    {\ensuremath{\mathcal{P}(#1\notin #2)}}



\maketitle
\begin{abstract}

One of the significant challenges to generating value-aligned behavior is to not only account for the specified user objectives but also any implicit or unspecified user requirements. The existence of such implicit requirements could be particularly common in settings where the user's understanding of the task model may differ from the agent's estimate of the model. Under this scenario, the user may incorrectly expect some agent behavior to be inevitable or guaranteed. This paper addresses such expectation mismatch in the presence of differing models by capturing the possibility of unspecified user subgoal in the context of a task captured as a Markov Decision Process (MDP) and querying for it as required. Our method identifies bottleneck states and uses them as candidates for potential implicit subgoals. We then introduce a querying strategy that will generate the minimal number of queries required to identify a policy guaranteed to achieve the underlying goal. Our empirical evaluations demonstrate the effectiveness of our approach in inferring and achieving unstated goals across various tasks.
\end{abstract}
\section{Introduction}

Humans often omit details they consider obvious, unavoidable, or not worth mentioning when providing instructions. This omission leads to implicit goals and unstated preferences that AI systems must navigate to truly align with human intentions. 
One potential source of such unstated subgoals or preferences could be behaviors that the human user setting the agent objectives may identify as inevitable.
The user would never bother stating anything regarding such behaviors, since they believe that it cannot be avoided.
One class of such behavior is that of visiting {\em bottleneck states} in the context of goal-based Markov Decision Process (MDP).
Here, bottleneck states refer to environment states that the agent must pass through to reach the stated goal.
In many cases, the human user may want the agent to pass through or visit some bottleneck states in addition to the goal, thus forming a set of intermediate subgoals.
However, the user may never specify them since, as far as the user is concerned, every trace of non-zero probability that leads to the goal passes through all the bottleneck states.
This should be all well and good, provided the human bottleneck states are also bottleneck states for the agent. Otherwise, the agent must make an effort to figure out what the user's underlying subgoals may be.


To see how such problems may arise, consider an agent tasked with guiding a tourist to a famous art museum. The tourist simply says, ``Get me a plan to get to the art museum," unaware of the city's metro system and expecting an above-ground route passing certain landmarks. The agent, however, might plan a route using the metro system.
For the agent's metro route, bottlenecks migh include entering the metro, making transfers, and exiting at the correct station. For the tourist's expected route, they might include crossing a river and passing through the city center.
This misalignment stems from differing world models: the agent's comprehensive transit data versus the tourist's limited knowledge of the city's layout. The challenge in AI alignment lies in bridging this gap – identifying and accounting for implicit aspects of the task that weren't mentioned.
This paper explores how an agent can learn and achieve the implicit subgoals of another agent, particularly when their understandings of the environment differ.

Traditional approaches to goal-conditioned reinforcement learning and planning often rely on explicit goal specification and assume shared world models between agents \cite{ijcai2022p770,chanesane2021goal}. This means trying to purely optimize for the achievement of specified human goals could
lead to suboptimal or even incorrect task execution in complex, real-world scenarios. 
We propose a novel approach that 
introduces and formalizes a specific type of
implicit subgoals within the 
MDP framework. Our method aims to compute policies that align with implicit subgoals by utilizing incomplete knowledge about another agent's model of the world. We formalize the problem using two distinct MDPs: the executing agent's model, the robot, and the goal-setting agent's model, the human. 
We will use the bottlenecks in the human model as a basis for potential hypotheses about the implicit subgoals of the human.
When potential implicit subgoals cannot be achieved, we will also make use of minimal querying to refine the agent's hypotheses about the human subgoals.

To evaluate our approach, we propose empirical evaluations in simulated scenarios. These evaluations will assess the ability to infer and align to implicit subgoals across various tasks.

Our key contributions include:
\begin{itemize}
    \item Formulation of implicit subgoals within an MDP framework
    \item Analysis of goal misalignment in the context of MDPs
    \item Introduction of methods to reason about and infer unstated goals across differing world models
    \item Development of novel querying algorithms for efficient information gathering to refine goal understanding
    \item Creation of a framework to learn and satisfy implicit subgoals using MDPs and information-theoretic querying
    \item Proposal of empirical evaluations in simulated scenarios to assess the ability to infer and align implicit subgoals across various tasks
\end{itemize}



\begin{figure*}[hbtp]
    \centering
    \includegraphics[width=\linewidth]{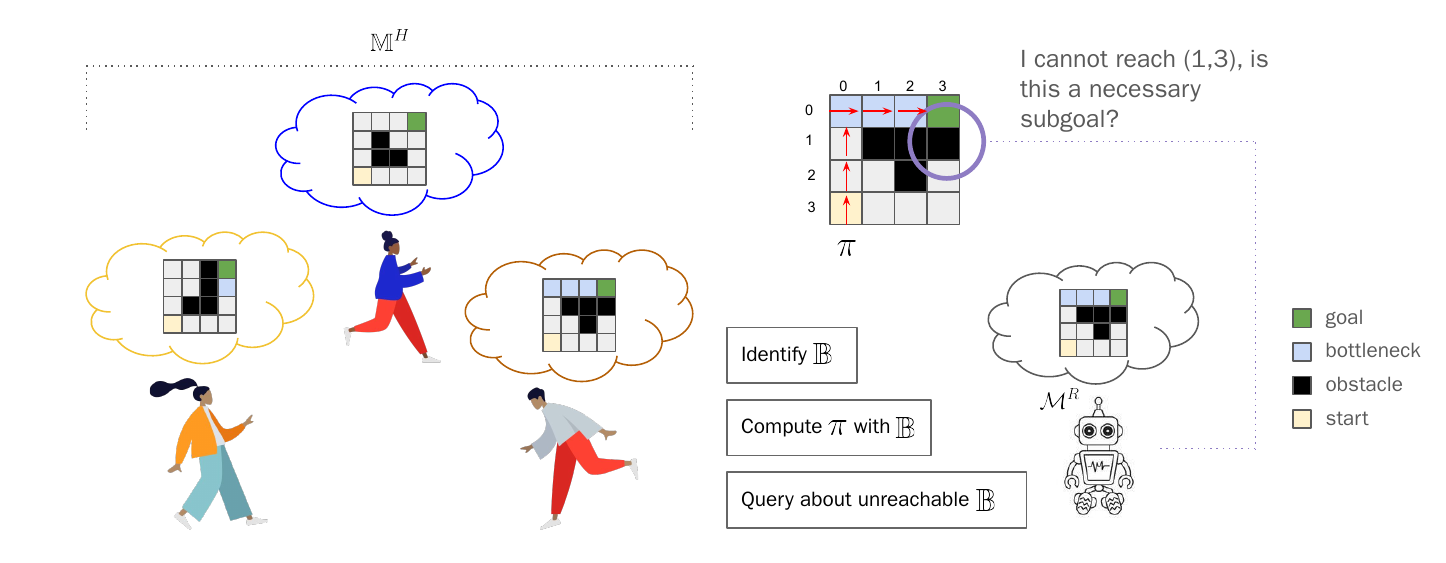}
    \caption{Bottleneck states are critical waypoints essential for reaching the goal in a given world model. Given a set of humans' world models $\mathbb{M}^{H}$, the robot has to compute a policy $\pi$ accounting for humans' $\mathbb{B}$, as they might be candidates for human implicit subgoals. Whenever the robot cannot reach a human's bottleneck due to discrepancies in world models, it queries whether this bottleneck is in fact a human subgoal.}
    \label{fig:implicit-goals}
\end{figure*}

\section{Background}
The problems studied in the paper will be ones that can be modeled as infinite horizon discounted Markov Decision Processes (MDP), particularly ones where the primary objectives correspond to achieving some goals. Here, goal achievement corresponds to visiting specific goal states.
The algorithms used in the paper will also exploit more traditional MDPs with rewards; as such, it is worth providing a more general definition. Specifically, an infinite horizon MDP is a tuple $\mathcal{M} = \langle S, A, T, s_0, \gamma, R \rangle$, where $S$ is the state space, $A$ is the action space, $T: S \times A \times S \rightarrow [0,1]$ is the transition function  (e.g., $ T(s_i,a_i,s^{\prime})$ provides the probability of transitioning from state $s$ to $s'$ under action $a$), $R: S \to \mathbb{R}$ is the reward function, $s_0 \in S$ is the initial state of the agent, and $\gamma \in [0,1)$ is the discount factor. Note that we will generally limit our attention to models where both $S$ and $A$ are finite sets.

In this setting, the solution takes the form of a deterministic, stationary policy $\pi: S \rightarrow A$ mapping states to actions. A value of a policy $\pi$, denoted as $V^\pi: S \to \Re$ provides the expected cumulative discounted reward obtained by following the policy from a given state. 
A policy is considered optimal if no policy with a value higher than the current one exists. 

In this paper, we focus on goal-directed problems, where the set of goal states is given as $S_G \subseteq S$.
In such problems, the reward function is sparse, i.e., it returns a small positive value for all states in $S_G$, and $0$ otherwise. The states in $S_G$ are treated as an absorbing state in that all transitions out of that state have zero probability. In such cases, we will represent the MDP equivalently in goal terms by dropping the reward function, i.e., $\mathcal{M} = \langle S, A, T, s_0, \gamma, S_G \rangle$.

A concept that we will leverage throughout the paper is that of goal-reaching traces. A goal-reaching trace of a policy from a state $s$, denoted as $\tau \sim_{\mathcal{M}} \pi(s)$, where $\tau = \langle s, \pi(s), ..., s_k \rangle$, corresponds to a state-action sequence with non-zero probability that terminates at a goal state. Since the reward is only provided by the goal, the value of the policy in a state is directly proportional to the probability of reaching the goal state under the given policy. We will capture this by the notation $P_G(s|\pi)$, which is given as

\[
P_G(s|\pi) = \sum_{\tau\sim\pi(s)} P(\tau|\pi),
\]

where $\tau$ are possible traces ending in a goal state and $P(\tau|\pi)$ is its likelihood under the given policy $\pi$.

\section{Planning for Implicit Subgoals}
As mentioned before, we are interested in a scenario where the robot's model of the task  $\mathcal{M}^R = \langle S, A, T^R, s_0, \gamma, S_G\rangle$ is different from the user's beliefs about the task $\mathcal{M}^H = \langle S, A, T^H, s_0, \gamma, S_G \rangle$ in terms of the underlying transition function\footnote{Even though we use the term robot to refer to the AI system, nothing in our framework requires the AI system to be physically embodied. Similarly, the user, in theory, could be any principal-agent setting the objective and not necessarily a human user.}. This difference leads the user to overlook specifying subgoals they are confident are bound to happen. 
We will leverage the notion of a {\em bottleneck state}, where a state is said to be a bottleneck state if it is reached in every path from the initial state to the goal. More formally,
\begin{defn}
\label{def:bottle}
 For a given MDP model $\mathcal{M} = \langle S, A, T, s_0, \gamma, S_G \rangle$, we define a \textbf{bottleneck state} as a state $s \in S$ that must be visited by any valid trace starting from $s_0$, for any policy, with non-zero probability of reaching a state in $S_G$. We denote the set of all bottlenecks as $B \subseteq S$. 
\end{defn}

Now, we assume that implicit subgoals ($\mathcal{I}_{G}$) are a subset of $B$ that the user wants the robot to achieve on its way to the goal. From the user's point of view, there is no need to spell out these implicit subgoals since they are bottleneck states in their own model and thus cannot be avoided. However, as the robot and user models differ, what is a bottleneck state in one model may not be in the other. What are interested in here is finding a policy that achieves the subgoals in the robot's models. Formally,
 
\begin{defn}
   For a given robot model $\mathcal{M}^R = \langle S, A, T^R, s_0, \gamma, S_G\rangle$, a policy $\pi$ is said to \textbf{achieve a set of implicit subgoal(s)}  $\mathcal{I}_{G}$, i.e., $\pi\models_{\mathcal{M}^R} \mathcal{I}_{G}$, if every goal reaching trace from $s_0$ $\tau \sim_{\mathcal{M}^R} \pi(s_0)$ passes through every state $s \in \mathcal{I}_{G}$. 
\end{defn}
The main challenge we face is to find a policy that achieves all the implicit subgoals when we do not know the implicit subgoals or the exact user model.
To achieve this, we make the following two reasonable assumptions: (1) We are given a set of potential user models $\mathbb{M}^{H}$ corresponding to different transition functions; and (2) We can query the user about potential implicit subgoals.
We will represent the latter using an oracle function of the form $\mathcal{O}_{\mathcal{I}_G}: S \rightarrow \{0,1\}$, where it can tell you whether a state $s$ is an implicit subgoal or not, i.e.,
\[\mathcal{O}_{\mathcal{I}_G}(s) = 
\begin{cases}
    1~\textrm{if}~s\in \mathcal{I}_G\\
    0~\textrm{otherwise}
\end{cases}
\]
Now, it is worth looking at how reasonable these assumptions are. It is always possible to have access to $\mathbb{M}^{H}$ by taking into account all possible well-formed transition functions that could be formulated for $S$ and $A$. This might result in an infinite set, but one can still formulate it. Similarly, for most cases of interest, such as when a human user is specifying a goal, they would also be able to identify whether or not a state is something they want to achieve.
Note that we are not claiming that the user can formulate an explicit set of all implicit subgoals upfront, but rather they can recognize states they want to achieve. While the former has been shown to fail \cite{Mechergui_Sreedharan_2024}, the latter is the backbone of all methods for learning from user preferences \cite{reddy2020,biyik2024}.
To make this assumption even more realistic, we will set the cost of querying the user extremely high. The problem then becomes to find the minimal set of queries we need to use to find a policy that achieves the true implicit subgoals.
\begin{defn}
    For a given robot model $\mathcal{M}^R$, a set of possible user models $\mathbb{M}^{H}$, and an oracle $\mathcal{O}_{\mathcal{I}_G}$ for an unknown implicit subgoal set $\mathcal{I}_G$, the problem of \textbf{query identification for implicit subgoals} is to choose states to pass to the oracle, so that the robot can identify a policy $\pi$ that is guaranteed to achieve the implicit subgoal $\mathcal{I}_G$, or that it can never come up with such a policy. 
\end{defn}

Not that we are not interested in finding any series of queries but an optimal one. After all, asking about all states $S$ will trivially satisfy this requirement. In this paper, we will focus on minimizing the expected cost of querying \cite{Bertsekas2010DynamicPA}. Here, each query carries a cost to ask, and the querying only ends once the agent can establish with certainty whether or not a policy exists that can satisfy all the human implicit subgoals along with the original goal. In this case, one can recursively define the expected cost for a given set of potential and known implicit goals. The optimal query at any point would correspond to the one that minimizes the expected value.
\begin{defn}
\label{def:optimal}
    At a given instance where the existence of the policy cannot be determined, with a set of remaining bottlenecks $\mathcal{B}$ and a set of known implicit subgoals \knownsubgoals, a state query $s_q$ is optimal if it minimizes the expected query cost for the bottleneck states and known sub-goals, i.e., where the expected query cost for a state $s$ is given as
\begin{align*}
\expquerycost{(\mathcal{B}, \knownsubgoals), s} = \querycost{s} +  (\queryprobin{s}{\mathcal{I}_G} * \queryvalue{\mathcal{B}\setminus \{s\}, \knownsubgoals \cup \{s\})} + \\
\queryprobnotin{s}{\mathcal{I}_G} * \queryvalue{(\mathcal{B}\setminus\{s\}, \knownsubgoals)}),
\end{align*}
    where $\queryvalue{(\mathcal{B}', \kappa')} = 0$, if $\kappa'$ is unachievable or if the set $\mathcal{B}' \cup \kappa'$ is achievable, else $\queryvalue{(\mathcal{B}', \kappa')} = \min_{s} \expquerycost{(\mathcal{B}', \kappa'), s}$
\end{defn}
As it should be clear from the form, we can map the problem itself into finding an optimal policy in an MDP. We will introduce such a query MDP in Section \ref{sec:query}. 

\section{Query Identification for Implicit Subgoals Set}
\label{sec:query}
Next, we will look at algorithms that will allow us to identify such queries.
Our primary approach here would involve first finding a set of potential implicit subgoals and then querying the user until we are able to whittle it down to a set that can be simultaneously achieved. 

Moving onto the problem of identifying potential implicit subgoals, we know that implicit subgoals are a subset of bottleneck states in the human model. Unfortunately, here, we might be given an infinite set of potential human models. However, in this setting, we can exploit the fact that bottleneck states are preserved through all outcome determinization \cite{Keller_Eyerich_2011}. Under all outcome determinization, every possible stochastic transition under an action is converted into a separate deterministic action. More formally, we construct a determinized version of a given model as follows:

\begin{defn}
    For a given MDP model $\mathcal{M} = \langle S, A, s_0, T, S_G\rangle$, a \textbf{determinized model} $\delta(\mathcal{M})$ is given as $\delta(\mathcal{M}) = \langle S, A', s_0, T', S_G \rangle$, such that for every non-zero transition $T(s_i,a_i,s^\prime)$ in $\mathcal{M}$, there exists a new action $a_i' \in A'$, such that $T(s_i,a_i',s^\prime)=1$. 
\end{defn}
This brings us to the first proposition that asserts that bottleneck states are preserved over determinization.
\begin{prop}
    Given a model $\mathcal{M}$ and its determinization $\delta(\mathcal{M})$, a state $s$ is a bottleneck state for $\mathcal{M}$ if and only if it is a bottleneck state in $\delta(\mathcal{M})$. 
\end{prop}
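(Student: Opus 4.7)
The plan is to reduce the claim to a statement purely about the \emph{reachability graph} of the MDP, defined as the directed graph $G(\mathcal{M})$ on $S$ with an edge $(s,s')$ whenever there exists some action $a$ with $T(s,a,s')>0$. I will show (a) that $G(\mathcal{M}) = G(\delta(\mathcal{M}))$ and (b) that whether a state is a bottleneck depends only on this graph.

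First I would verify (a) directly from the definition of $\delta(\mathcal{M})$. An edge $(s,s')$ is in $G(\mathcal{M})$ iff there is some $a$ with $T(s,a,s')>0$; by construction of the determinization, this is exactly the condition under which a new deterministic action $a'$ is introduced with $T'(s,a',s')=1$, and every action in $\delta(\mathcal{M})$ arises this way. So the two edge sets coincide.

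For (b) I would prove the following characterization: $s$ is a bottleneck in $\mathcal{M}$ if and only if every directed path in $G(\mathcal{M})$ from $s_0$ to a state in $S_G$ passes through $s$. The forward (``only if'') direction is easy because any goal-reaching trace $\tau = \langle s_0,a_0,s_1,\ldots,s_k\rangle$ with positive probability under some policy projects to such a path in $G(\mathcal{M})$, so if the path characterization fails we immediately build a trace avoiding $s$. The reverse direction requires constructing, from a graph-theoretic path avoiding $s$, an actual stationary deterministic policy whose induced trace avoids $s$ with positive probability: pick a \emph{simple} path $s_0, s_1, \ldots, s_k \in S_G$ avoiding $s$ (which exists whenever any such path exists, by contracting cycles), for each edge $(s_i,s_{i+1})$ choose an action $a_i$ witnessing $T(s_i,a_i,s_{i+1})>0$, and define $\pi(s_i)=a_i$ (extending arbitrarily elsewhere). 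Because the path is simple, $\pi$ is well defined, and the trace has probability $\prod_i T(s_i,a_i,s_{i+1})>0$.

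Combining (a) and (b) yields the proposition: $s$ is a bottleneck in $\mathcal{M}$ iff it lies on every $s_0$-to-$S_G$ path in $G(\mathcal{M}) = G(\delta(\mathcal{M}))$ iff $s$ is a bottleneck in $\delta(\mathcal{M})$. The main obstacle is the reverse direction of (b): a stationary deterministic policy must commit to a single action per state, so a graph path that revisits states cannot be executed action-for-action. The key move is to reduce to simple paths before translating back into a policy; once that reduction is made, the rest is bookkeeping.
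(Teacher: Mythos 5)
Your proof is correct, but it is considerably more careful than the paper's own argument, which simply asserts that ``any goal-reaching trace in $\mathcal{M}$ is a goal-reaching trace in $\delta(\mathcal{M})$, and vice versa'' and concludes from Definition 1. Taken literally that assertion is loose: traces include actions, and the action sets of $\mathcal{M}$ and $\delta(\mathcal{M})$ differ; moreover, a stationary deterministic policy cannot reproduce a state sequence that revisits a state with different successors, so a trace of one model need not be realizable verbatim under any policy of the other. Your route fixes exactly this: you factor the bottleneck property through the common reachability graph $G(\mathcal{M}) = G(\delta(\mathcal{M}))$, and the simple-path reduction is the right device for turning a graph path avoiding $s$ into a stationary deterministic policy with a positive-probability goal-reaching trace avoiding $s$ (in either model). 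What the paper's one-liner buys is brevity and the correct intuition (bottleneck-ness depends only on which state sequences can reach the goal with positive probability); what your version buys is an actual proof of that invariance, closing the gap about policy realizability. One small bookkeeping remark: your direction labels are swapped --- the projection of traces onto graph paths gives ``every path through $s$ $\Rightarrow$ $s$ is a bottleneck,'' while the simple-path construction gives the contrapositive of the other implication --- but both implications are present and the argument as a whole is sound.
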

The proof of this proposition follows from the fact that any goal-reaching trace in $\mathcal{M}$ is a goal-reaching trace in $\delta(\mathcal{M})$, and vice versa. Thus, following Definition \ref{def:bottle}, the bottleneck states are the same across the two models.
It is easy to see that the number of possible unique determinized models for finite state and action sets is finite. After all, the maximum number of actions possible in any state is $|S|$. 
This brings us to the next proposition, which will establish the fact that the set of all determinized models (even if the original set isn't) is finite and considering determinization will never result in us looking at more models.
\begin{prop}
    Let $\mathbb{M}^H$ be the set of potential human models, where $S$ and $A$ are finite.  $\delta(\mathbb{M}^H)$ be given as
    \[\delta(\mathbb{M}^H) = \{\delta(\mathcal{M})\mid \mathcal{M} \in \mathbb{M}^H\}\]
    Then $\delta(\mathbb{M}^H)$ is finite and $|\delta(\mathbb{M}^H)| \leq |\mathbb{M}^H|$.
\end{prop}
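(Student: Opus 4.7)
The plan is to first establish finiteness through a direct counting argument, and then derive the cardinality bound essentially for free. For finiteness, the key observation is that since $S$ and $A$ are finite, the set of conceivable nonzero transitions is contained in the finite set $A \times S \times S$. Every determinized model $\delta(\mathcal{M})$ retains the fixed components $(S, s_0, \gamma, S_G)$ and is characterized (up to relabeling of the freshly introduced actions) by which triples $(s, a, s')$ correspond to nonzero transitions in $\mathcal{M}$ -- each such triple contributes one deterministic action in $A'$ whose only effect is $s \mapsto s'$. The total number of such subsets is $2^{|A|\cdot|S|^2}$, so the collection of possible determinized models is itself a finite set, and $\delta(\mathbb{M}^H)$ is a subset of it.

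For the cardinality inequality $|\delta(\mathbb{M}^H)| \leq |\mathbb{M}^H|$, the map $\mathcal{M} \mapsto \delta(\mathcal{M})$ is a well-defined function from $\mathbb{M}^H$ onto $\delta(\mathbb{M}^H)$, so the image cannot exceed the domain in cardinality. The inequality can be strict: two stochastic models that agree on which transitions are nonzero but differ in the actual probability values produce the same determinized model, so $\delta$ may genuinely collapse many inputs to one output, which is exactly the informal message the proposition is meant to convey.

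The main subtlety, modest though it is, lies in formalizing the equivalence under which two determinized models are considered identical. The definition of $\delta$ speaks of introducing ``a new action'' for each nonzero transition, so a literal reading would let two applications of $\delta$ yield formally distinct action labels. The natural convention -- and the one consistent with how $\delta(\mathcal{M})$ was used in the preceding proposition on bottleneck preservation, where only goal-reaching traces matter -- is to identify two determinized models whenever their induced deterministic transition relations coincide. Once this identification is made explicit, no further technical difficulty remains, and both claims of the proposition follow from the counting argument above.
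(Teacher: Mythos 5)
Your argument is correct and mirrors the paper's own reasoning: finiteness comes from the fact that only finitely many determinized models exist over finite $S$ and $A$, and the cardinality bound comes from $\delta$ being a map from $\mathbb{M}^H$ onto its image (the paper phrases this as determinization being surjective). Your explicit counting bound and the remark about identifying determinized models up to relabeling of the fresh actions are useful elaborations, but not a different approach.
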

The first property comes from the fact that the set of all possible determinized models is finite. The second one follows from the fact that determinization is an inherently surjective function.

The previous results show that, even if we start with infinite potential human models, we can determine the bottleneck states over a finite set of determinized models. We need a procedure to identify whether a state is a bottleneck state in a determinized model. To do this, we will create a modified MDP, where passing through the tested state is penalized while reaching the goal is given a small positive value. Now, if the state under test is not a bottleneck state, the optimal policy would involve avoiding the target state when possible. As such, we can assert it is not a bottleneck state when the optimal value for state $s_0$ is greater than zero. More formally,
\begin{prop}
\label{prop:avoid}
    For a determinized model $\delta(M) = \langle S, A, T, s_0, \gamma, S_G\rangle$, and for a target state $s_i$, we create a new MPD, $\mathcal{M}^{s_i} = \langle S, A, T, s_0, \gamma, R^{s_i}_{S_G}\rangle$, such that
    \[R(s) = \begin{cases}
        n ~\textrm{when}~s=s_i\\
        p ~\textrm{when}~s\in S_G\\
        0 ~\textrm{otherwise}
    \end{cases}\]
    where $n<0$, $p>0$ and $|n|>>p$. Here $s_i$ is not a bottleneck state, if and only if $V^*(s_0) > 0$ under the optimal policy for $\mathcal{M}^{s_i}$. 
\end{prop}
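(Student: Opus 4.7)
The plan is to prove the biconditional as two separate implications, leveraging the fact that $\delta(\mathcal{M})$ is deterministic so that every stationary deterministic policy from $s_0$ induces a unique trace.

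For the ``only if'' direction (not a bottleneck implies $V^*(s_0) > 0$), I would apply Definition~\ref{def:bottle} directly. If $s_i$ is not a bottleneck, there exists some policy in $\delta(\mathcal{M})$ inducing a goal-reaching trace $\tau = \langle s_0, a_1, s_1, \ldots, s_k\rangle$ that avoids $s_i$, and after trimming any repeated states we may assume $\tau$ is cycle-free. I would then lift $\tau$ into a stationary deterministic policy $\pi$ mapping each state along $\tau$ to the corresponding action (and extended arbitrarily off $\tau$), so by determinism of $\delta(\mathcal{M})$ the policy $\pi$ induces exactly $\tau$ from $s_0$. Since the only nonzero reward encountered is the absorbing goal reward $p$, this gives $V^\pi(s_0) = p\gamma^k/(1-\gamma) > 0$, and hence $V^*(s_0) \geq V^\pi(s_0) > 0$.

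For the converse, I would argue the contrapositive: assume $s_i$ is a bottleneck and show $V^*(s_0) \leq 0$. Fix any stationary deterministic $\pi$ and let $\tau_\pi$ be the induced trace from $s_0$. If $\tau_\pi$ never enters $S_G$, then the only nonzero rewards it accumulates are the negative penalties from visits to $s_i$, so $V^\pi(s_0) \leq 0$. If $\tau_\pi$ reaches $S_G$ at some step $k$, the bottleneck hypothesis forces $\tau_\pi$ to visit $s_i$ at some earlier step $j < k$, yielding $V^\pi(s_0) \leq n\gamma^j + p\gamma^k/(1-\gamma)$. Because $j < k$ and $n < 0$, this quantity is bounded above by $\gamma^k(n + p/(1-\gamma))$, which is non-positive whenever $|n| \geq p/(1-\gamma)$ -- a precise form of the informal hypothesis $|n| \gg p$. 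Taking the supremum over $\pi$ gives $V^*(s_0) \leq 0$, as desired.

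The main obstacle is to pin down ``$|n| \gg p$'' into a concrete inequality that covers every trace an arbitrary policy can induce, including traces that revisit $s_i$ multiple times or cycle through non-goal, non-$s_i$ states. The worst case for the contrapositive is a policy that delays visiting $s_i$ as long as possible yet still reaches $S_G$ quickly; the combinatorial constraint $j < k$ together with $|n| \geq p/(1-\gamma)$ already dominates that case, and the remaining sub-cases (no goal ever reached, repeated penalty visits) only push the return further below zero. With the quantitative form of the assumption fixed, both implications glue together into the stated biconditional.
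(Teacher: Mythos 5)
Your proof is correct and takes essentially the same route as the paper, whose proof is only a one-line sketch of the forward direction (a goal-reaching path avoiding $s_i$ yields a positive value at $s_0$) and leaves the converse to the informal condition $|n| \gg p$. Your contrapositive argument, with the explicit threshold $|n| \geq p/(1-\gamma)$ guaranteeing $V^*(s_0) \leq 0$ whenever $s_i$ is a bottleneck, is a sound completion of that implicit step.
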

The validity of the proposition follows from the fact that if state $s_i$ is not a bottleneck state, then there should exist a path from $s_0$ to the goal that doesn't pass through $s_i$, which should result in a positive value for $s_0$. 

With this procedure in place, we can collect all the bottleneck states for each determinized model in the set $\delta(\mathbb{M}^H)$ by testing each state on each model. The union over these states represents our initial hypotheses set for the implicit goals ($\mathcal{H}^0_{\mathcal{I}}$). Our next objective will be to identify the set of maximal subsets of $\mathcal{H}^0_{\mathcal{I}}$, for which the robot can generate policies that achieve them. However, before we discuss the search procedure to find maximal subsets, we need a procedure that can identify policies that achieve subgoal when one exists. We will again convert it to that planning over MDPs. In this case, this will involve planning over two different planning problems. Firstly, one that will only count goal achievement if the trace passes through all the subgoals. 

\begin{prop}
    For an MDP $\mathcal{M} = \langle S, A, T, s_0, \gamma, S_G\rangle$, and a subgoal set $\hat{S}$, we create a new MDP $\mathcal{M}^{\hat{S}} = \langle S^{\hat{S}}, A, T^{\hat{S}}, s_0, \gamma, R^{\hat{S}}\rangle$, where
    \begin{itemize}
        \item $S^{\hat{S}}$ - New state space. Obtained by copying over the original states and having a copy for reaching the state after passing through each subgoal, i.e., $|S^{\hat{S}}| = |S|* |2^{\hat{S}}|$. A way to conceptualize the new state space is to assume we are adding $|\hat{S}|$ binary features to each state that track whether specific subgoals were visited previously.
        \item $T^{\hat{S}}$ - Identical to the previous transition function, but now every outgoing transition from a state in $\hat{s} \in \hat{S}$ turns the corresponding feature for $\hat{s}$ true in the resultant state.
        \item $R^{\hat{S}}$ - The new reward function. This function only returns a positive value for copies of previous goal states where all with features for all the states in $\hat{S}$ turned true. A negative reward is returned for all other copies of the goal state.
    \end{itemize}
    If there exists a policy $\pi$ that achieves the subgoal set, then there exists a policy $\hat{\pi}$, such that all goal reaching trace for $\hat{\pi}$ exits at the goal state copies where all the features for subgoals are true.
\end{prop}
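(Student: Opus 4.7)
The plan is to exhibit the required policy $\hat{\pi}$ by lifting the given policy $\pi$ from $\mathcal{M}$ into the augmented state space of $\mathcal{M}^{\hat{S}}$, and then to show via a trace-by-trace correspondence that the subgoal-visiting property of $\pi$ translates into $\hat{\pi}$ exiting only at the fully-marked goal copies. Concretely, I would define $\hat{\pi}(s, f) := \pi(s)$ for every augmented state $(s, f)$, where $f \in \{0,1\}^{\hat{S}}$ is the feature vector. Because the action selected depends only on the original-state component, and because the transition function $T^{\hat{S}}$ agrees with $T$ on the original-state projection (with the feature flags updated deterministically upon visiting a state in $\hat{S}$), $\hat{\pi}$ is a well-defined stationary policy on $S^{\hat{S}}$.

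Next, I would establish a bijection between goal-reaching traces of $\pi$ in $\mathcal{M}$ starting at $s_0$ and goal-reaching traces of $\hat{\pi}$ in $\mathcal{M}^{\hat{S}}$ starting at $(s_0, \mathbf{0})$, with matching probabilities. The forward map takes a trace $\tau = \langle s_0, a_0, s_1, a_1, \ldots, s_k\rangle$ and augments it with the unique sequence of feature vectors $f_0 = \mathbf{0}, f_1, \ldots, f_k$, where $f_{i+1}$ turns on the $\hat{s}$-flag in $f_i$ iff $s_{i+1} \in \hat{S}$. The inverse simply projects out the feature component. Since $T^{\hat{S}}$ assigns the same probability to each augmented transition as $T$ does to the underlying transition, this correspondence preserves trace probability under the two policies.

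From this bijection, the key property follows quickly. Suppose $\pi$ achieves the subgoal set $\hat{S}$ in $\mathcal{M}$; then every goal-reaching trace $\tau \sim_{\mathcal{M}} \pi(s_0)$ passes through every $\hat{s} \in \hat{S}$. Walking along the corresponding augmented trace, each time such an $\hat{s}$ is visited the associated flag is flipped to true and never flipped back, because $T^{\hat{S}}$ only sets flags on outgoing transitions from subgoal states. Hence by the time the trace reaches a copy of a state in $S_G$, the feature vector is $\mathbf{1}$, so the trace exits at the all-true copy as required. The fact that every such trace has non-zero probability in $\mathcal{M}^{\hat{S}}$ (matching $\pi$'s trace probabilities) rules out the possibility that $\hat{\pi}$ secretly reaches goals through other copies.

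The main obstacle is bookkeeping rather than depth: one must verify that the feature-flipping dynamics in $T^{\hat{S}}$ are defined in precisely the way that makes the trace bijection a probability-preserving isomorphism, and that a stationary lifted policy $\hat{\pi}(s,f) = \pi(s)$ actually suffices (i.e., no memory beyond the flags themselves is needed because the flags already encode the relevant history). A mild subtlety worth flagging is the edge case where a subgoal state coincides with a goal state or with $s_0$; in each case the initial and final feature values must be handled consistently with the convention that a flag is set upon \emph{leaving} (or equivalently, upon first visiting) the subgoal state, so that a goal state that also lies in $\hat{S}$ is counted as visited when it becomes the terminating state of the trace.
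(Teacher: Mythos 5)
Your proposal is correct, and it is more explicit than what the paper actually offers. The paper's entire justification is a single sentence appealing to the reward structure of $\mathcal{M}^{\hat{S}}$ (any trace reaching a goal copy without all subgoal flags true receives a negative reward), which is really an argument about why solving the augmented MDP tends to surface the desired policy rather than a direct proof of the existential claim. You instead construct $\hat{\pi}$ explicitly by lifting $\pi$ via $\hat{\pi}(s,f)=\pi(s)$, set up the probability-preserving bijection between goal-reaching traces of $\pi$ in $\mathcal{M}$ and of $\hat{\pi}$ in $\mathcal{M}^{\hat{S}}$, and use monotonicity of the flags to conclude that every such trace exits at the all-true goal copy; this proves the stated claim without invoking the reward function at all, and is the cleaner route for the literal statement. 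Two small points: your forward map first says the flag flips when $s_{i+1}\in\hat{S}$ (i.e., on entering a subgoal), while the paper's $T^{\hat{S}}$ sets the flag on \emph{outgoing} transitions from a subgoal state; you later switch to the paper's convention and explicitly flag the discrepancy, which is the right instinct --- under the paper's literal convention a subgoal that coincides with an absorbing goal state would never have its flag set, an edge case the paper silently ignores and which your final paragraph correctly identifies as needing a consistent convention. So your argument is sound and, if anything, more careful than the paper's.
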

Now, this proposition is true because any trace that ends in a goal state without passing through all the subgoal states will provide a negative reward. However, one might not be able to tell if the policy identified corresponds to such a policy or not. 
One can only do so by running a test over the determinized version of $\mathcal{M}^{\hat{S}}$, similar to the one described in Proposition \ref{prop:avoid}. 
One difference is that the actions in each state are limited to the one listed by $\hat{\pi}$. Here the test is run for each potential state in $\hat{S}$.

Now, with these tests in place, we can start searching for the maximally achievable subsets over the union of all bottleneck states across the potential human models.


\begin{algorithm}[!ht] 
\caption{Find the set of maximally achievable subsets of the set of all possible human bottleneck states.}
\begin{algorithmic}[1]
\State \textbf{Input:} $\mathcal{M}^R$, $B$ 
\State \textbf{Output:} Set of maximal achievable subsets $I$

\Function{FindMaximalAchievableSubsets}{$\mathcal{M}^R$, $B$}
    \State \Return GenerateSubsets(0, $\emptyset$, $B$, $\mathcal{M}^R$)
\EndFunction

\Function{GenerateSubsets}{index, current\_subset, B, $\mathcal{M}^R$}
    \If{index = |B|} 
        \If{¬CheckAchievability(current\_subset, $\mathcal{M}^R$)}
            \State \Return $\emptyset$
        \EndIf
        \State maximal\_subset $\leftarrow$ current\_subset
        \For{i from $|\textrm{current\_subset}|$ to $|B|$ - 1}
            \State new\_subset $\leftarrow$ maximal\_subset $\cup$ {B[i]}
            \If{CheckAchievability(new\_subset, $\mathcal{M}^R$)}
                \State maximal\_subset $\leftarrow$ new\_subset
            \EndIf
        \EndFor
        \State \Return {maximal\_subset}
    \EndIf
    
    \State result $\leftarrow$ result $\cup$ GenerateSubsets(index + 1, current\_subset $\cup$ {B[index]}, B, $\mathcal{M}^R$)
    \State \Return result
\EndFunction
\end{algorithmic}
\label{algo:maximal_subsets}
\end{algorithm}
Algorithm \ref{algo:maximal_subsets} provides the pseudo-code for finding maximal achievable subsets. It follows a recursive depth-first approach with pruning, where the function GenerateSubsets systematically explores combinations of bottleneck states. Starting from an empty set, at each recursive step, it makes two choices: either include or exclude the current state indexed by ``index". When an achievable subset is found, it checks if adding any remaining element makes it unachievable to confirm maximality. The algorithm maintains efficiency through early termination - if a subset is unachievable, all its supersets are pruned from consideration. All maximal achievable subsets are collected in $I$. By using caching and systematic exploration, the algorithm identifies all maximal subsets that can be achieved by a policy in $\mathcal{M}^R$, avoiding the exploration of redundant or unnecessary combinations.
Here, we can further improve the efficiency by doing a simple pre-test before generating a policy for $\mathcal{M}^{\hat{S}}$. We can first test whether there exists a path in the determinized model that goes through all the states in the current set. If it doesn't, we can directly mark it as being unachievable.

With the identification of $\mathbb{I}$, we are finally set to identify the optimal querying strategy. The problem of finding a querying strategy will be converted into an MDP planning problem.
\begin{defn}
    For a set of potentially achievable subgoals $\mathbb{I}$, selected from a bottleneck set $\mathbb{B}$, the query MDP is defined as $\mathcal{M}^{Q} = \langle S^Q, A^Q, T^Q, s_0^Q, \gamma, R^Q\rangle$, where each component is defined as follows.
    \begin{itemize}
        \item $S^Q$ - Here, each state consists of subgoals known to be part of the human implicit subgoals and those known to be not part of it, i.e.,  $S^Q = 2^{\mathbb{B}} \times 2^{\mathbb{B}} $. Each state is represented as $(K_{\mathcal{I}}, K_{\neg\mathcal{I}})$, where $K_{\mathcal{I}}$ are the states known to be implicit subgoals and $K_{\neg\mathcal{I}})$ the ones known to be not.
        \item $A^Q$ - One action to query about each element in $\mathbb{B}$, i.e., $|A^Q| = |\mathbb{B}|$. The action space is pruned by eliminating queries to already classified bottlenecks, reducing the branching factor of the search.
        \item $T^Q$ - A transition function that replicates potential outcomes of the oracle and determines the absorber state:
        \begin{itemize}
            \item Here, there are two classes of absorber states that enable early termination. One where the current state belongs to one of the achievable maximal set, i.e., a state $(K_{\mathcal{I}}, K_{\neg\mathcal{I}})$, where the potential subgoals $\hat{\mathcal{I}} =  K_{\mathcal{I}} \cup (\mathbb{B}\setminus K_{\neg\mathcal{I}})$ are of the form that $\hat{\mathcal{I}} \subset \mathcal{I}'$, for some $\mathcal{I}' \in \mathbb{I}$. In the second, the known implicit subgoals are not part of any achievable subgoal set, i.e., $\not\exists \mathcal{I}' \in \mathbb{I}$, such that $K_{\mathcal{I}} \subseteq \mathcal{I}'$. This enables early termination of the search when either an unachievable bottleneck is found to be necessary or all bottlenecks are classified.
            \item If $a$ corresponds to a state $s$ not part of the known or unknown part of the state, it will transition to $(K_{\mathcal{I}} \cup{s}, K_{\neg\mathcal{I}})$ or  $(K_{\mathcal{I}}, K_{\neg\mathcal{I}} \cup{s})$ with equal likelihood.
            \item If the $a$ corresponds to a state that is already part of $K_{\mathcal{I}}$ or $K_{\neg\mathcal{I}}$, it leaves the state unchanged.
        \end{itemize}
        \item $s_0^Q$ - start state where nothing is known, i.e., $s_0^Q = (\{\},\{\})$.
        \item $R^Q$ - The reward function. For any non-absorbing state it returns a heavy penalty associated with query $C^Q < 0$. For the absorbing state, if it is an unachievable state, it returns zero, and for the achievable state, it returns a scaled positive value $p_{\mathcal{I}}$ proportional to the value of the initial state in the corresponding optimal policy for $\mathcal{M}^{\hat{S}}$. The positive values are selected such that if the optimal value returned for the $K_{\mathcal{I}_1}$ is higher than $K_{\mathcal{I}_2}$, then $p_{\mathcal{I}_1} > p_{\mathcal{I}_2}$. Also, for all $p_{\mathcal{I}_i}$, we require that it is smaller than the magnitude of the penalty from a single query, i.e., $|C^Q| > p_{\mathcal{I}_max}$, where $p_{\mathcal{I}_max}$ highest reward possible for any achievable state.
        \item $\gamma$ - is set to a high discount factor so as not to overlook the cost of future queries too much.
    \end{itemize}
\end{defn}
\begin{prop}
    The optimal policy identified for the MDP $\mathcal{M}^{Q}$, corresponds to an optimal query strategy described in Definition \ref{def:optimal}
\end{prop}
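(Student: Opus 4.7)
The plan is to show that the Bellman optimality equation for $\mathcal{M}^Q$ coincides with the recursive expression for $\queryvalue{\cdot}$ in Definition \ref{def:optimal}, up to the sign convention (MDPs maximize reward, while the definition minimizes cost). The first step is to set up a bijection between non-absorbing MDP states and the instances in Definition \ref{def:optimal}: each state $(K_{\mathcal{I}}, K_{\neg\mathcal{I}})$ corresponds to the pair $(\mathcal{B}, \knownsubgoals) = (\mathbb{B} \setminus (K_{\mathcal{I}} \cup K_{\neg\mathcal{I}}), K_{\mathcal{I}})$, and query actions correspond one-to-one with state queries. I would then verify that the two absorbing conditions realize exactly the two base cases of $\queryvalue{\cdot} = 0$: the \emph{unachievable} condition ($\not\exists \mathcal{I}' \in \mathbb{I}$ with $K_{\mathcal{I}} \subseteq \mathcal{I}'$) matches ``$\kappa'$ unachievable'', since $\mathbb{I}$ enumerates the maximal achievable subsets and every achievable set is contained in some maximal one; and the \emph{early-achievable} condition ($\hat{\mathcal{I}} \subset \mathcal{I}'$ for some $\mathcal{I}' \in \mathbb{I}$, where $\hat{\mathcal{I}} = K_{\mathcal{I}} \cup (\mathbb{B} \setminus K_{\neg\mathcal{I}})$) matches ``$\mathcal{B}' \cup \kappa'$ achievable'', since $\hat{\mathcal{I}}$ is precisely $\mathcal{B}' \cup \kappa'$ under the bijection.

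For non-absorbing states I would unroll the Bellman equation. Because each query action deterministically yields a two-point successor distribution with equal mass on the ``classified as subgoal'' and ``classified as non-subgoal'' outcomes, the lookahead reduces to $V^*(s) = \max_{s_q}\bigl[C^Q + \gamma\bigl(\tfrac{1}{2}V^*(s_+) + \tfrac{1}{2}V^*(s_-)\bigr)\bigr]$. Negating this, identifying $\querycost{s_q} = -C^Q$, and taking $\gamma \to 1$ recovers exactly $\queryvalue{(\mathcal{B}, \knownsubgoals)} = \min_{s_q} \expquerycost{(\mathcal{B}, \knownsubgoals), s_q}$ under a uniform oracle prior. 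A straightforward induction on the remaining classification depth $|\mathbb{B}| - |K_{\mathcal{I}} \cup K_{\neg\mathcal{I}}|$ shows the values, and hence the $\arg\max$ actions, agree at every non-absorbing state, so the MDP's optimal policy realizes an optimal query strategy in the sense of Definition \ref{def:optimal}.

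The main obstacle lies in three features present in $\mathcal{M}^Q$ but absent from the literal statement of Definition \ref{def:optimal}: hard-coded equiprobable oracle outcomes, a discount factor $\gamma < 1$, and scaled positive rewards $p_{\mathcal{I}}$ at achievable absorbing states. The first is handled by an explicit uniform-prior assumption, or equivalently by generalizing $T^Q$ to arbitrary $\queryprobin{s}{\mathcal{I}_G}$. The second is handled by invoking the $\gamma \to 1$ limit or noting that, given the bounded horizon of at most $|\mathbb{B}|$ queries, a sufficiently high $\gamma$ preserves the strict ordering among candidate strategies. The subtlest piece is the $p_{\mathcal{I}}$ terms, which apparently bias the MDP toward higher-reward terminations even though Definition \ref{def:optimal} assigns value $0$ uniformly at successful absorbing cases. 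The constraint $|C^Q| > p_{\mathcal{I}_{\max}}$ ensures these bonuses cannot outweigh even a single additional query, so they serve only as tiebreakers among strategies already optimal under Definition \ref{def:optimal}; a short argument is needed to confirm that the MDP's optimum never trades a strictly cheaper query sequence for a costlier one solely to secure a larger $p_{\mathcal{I}}$.
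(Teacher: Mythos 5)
Your proposal is correct and takes essentially the same route as the paper, which likewise argues that the cost and transition structure of $\mathcal{M}^Q$ make its Bellman optimality equations replicate the recursion of Definition \ref{def:optimal} (with min replaced by max for the cost-to-reward switch), and that the positive terminal rewards $p_{\mathcal{I}}$, being dominated by the query penalty $|C^Q|$, function only as a secondary objective that never trades additional queries for a larger bonus. Your write-up is merely more explicit than the paper's sketch about the uniform oracle prior, the role of the discount factor, and the tie-breaking argument you flag as still needing detail.
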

This follows from the structure of the MDP. The cost and transition function, here, are selected so the Bellman equations for the MDP replicate the optimality equations referred to in Definition \ref{def:optimal} (with min replaced with max to reflect the switch from costs to rewards). One point of departure here from the earlier definition is the allocation of positive rewards to absorber states where the reward is proportional to its value associated in the model $\mathcal{M}^{\hat{S}}$. Given the role played by 
discount factor, this means that a higher value is associated with bottleneck subsets where the goals and subgoals are achieved over shorter traces. This means that the problem of finding subgoal sets that are easier to achieve becomes a secondary objective for the MDP. However, please note that the larger penalty for the query cost means that this secondary objective will never be pursued at the cost of a potentially larger number of queries.\\

Now, even though there are efficient MDP solvers, solving the above MDP could be computationally expensive if there exists a large number of possible bottleneck states. However, it is possible to show that we can actually build a smaller MDP that first filters out all the bottleneck states that cannot be achieved in the robot model and create a new query MDP only containing the remaining states. We will refer to the resulting MDP as the pruned query MDP (and represent it as $\hat{\mathcal{M}}^Q$). Now, we will create a meta query policy that will first query about all non-achievable bottlenecks before switching over to the optimal policy for the pruned query MDP ($\hat{P}^Q$). We will refer to this new modified query as $\Pi^Q$, and it is defined as
\begin{align*}
  \Pi^Q((K_{\mathcal{I}}, K_{\neg\mathcal{I}})) =
  \begin{cases}
s_i \textrm{ if } |\mathbb{B}^{\empty}\setminus(K_{\mathcal{I}} \cup K_{\neg\mathcal{I}})| < 0,\\
~~~~~~~~~~\textrm{where } s_i \in \mathbb{B}^{\empty}\setminus(K_{\mathcal{I}} \cup K_{\neg\mathcal{I}})\\ \hat{P}^Q((K_{\mathcal{I}} \setminus \mathbb{B}^{\empty},K_{\neg\mathcal{I}} \setminus \mathbb{B}^{\empty})) ~~\textrm{Otherwise},
    \end{cases}
\end{align*}
where $\mathbb{B}^{\empty}$ is the set non-achievable bottleneck states. Even though such a pruning method could result in an exponential reduction in the state space of the MDP problem to be solved, we can show that this new policy is in fact optimal for the original query model. Or more formally,
\begin{prop}
    The meta policy $\Pi^Q$ is an optimal policy for the query MDP $\mathcal{M}^Q$.
\end{prop}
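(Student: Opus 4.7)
The plan is to show $V^{\Pi^Q}(s_0^Q) = V^{\pi^*}(s_0^Q)$ for any optimal policy $\pi^*$ of $\mathcal{M}^Q$, by combining a structural fact about $\mathbb{B}^{\empty}$ with an exchange argument on query order.

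First I would establish that no $\mathcal{I}' \in \mathbb{I}$ intersects $\mathbb{B}^{\empty}$: if $s \in \mathcal{I}' \cap \mathbb{B}^{\empty}$, then the robot policy witnessing $\mathcal{I}' \in \mathbb{I}$ would also achieve $s$ in $\mathcal{M}^R$, contradicting $s \in \mathbb{B}^{\empty}$. Two consequences follow. (a) Every successful absorbing state $(K_{\mathcal{I}}, K_{\neg\mathcal{I}})$ of $\mathcal{M}^Q$ must satisfy $\mathbb{B}^{\empty} \subseteq K_{\neg\mathcal{I}}$, since otherwise some $s \in \mathbb{B}^{\empty}$ survives inside $\hat{\mathcal{I}} = K_{\mathcal{I}} \cup (\mathbb{B} \setminus K_{\neg\mathcal{I}})$ and prevents $\hat{\mathcal{I}} \subseteq \mathcal{I}'$ for any $\mathcal{I}' \in \mathbb{I}$. (b) The moment any $s \in \mathbb{B}^{\empty}$ enters $K_{\mathcal{I}}$, the state becomes absorbing-failure with reward $0$, since no $\mathcal{I}' \in \mathbb{I}$ contains it.

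Next I would argue that $\pi^*$ can be transformed, without lowering its value, into a policy that first queries every state of $\mathbb{B}^{\empty}$ and only then queries states outside $\mathbb{B}^{\empty}$. At any reachable non-absorbing state where $\pi^*$ queries some $s_a \notin \mathbb{B}^{\empty}$ while an $s_n \in \mathbb{B}^{\empty}$ is still unclassified, define $\pi'$ that first queries $s_n$ and, on the ``oracle returns $0$'' branch, continues with $\pi^*$'s subtree suitably relabeled to account for the shifted information. Couple the two executions on the oracle outcomes. On the ``$s_n$ returns $1$'' branch, $\pi'$ reaches a zero-reward absorbing state in one step; under $\pi^*$ that same branch either eventually queries $s_n$ (incurring strictly more query cost before landing at the same zero reward, by (b)) or never classifies $s_n$ and thus, by (a), can only reach zero-reward absorbing states. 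In either case $\pi^*$ strictly loses. On the ``$s_n$ returns $0$'' branch, $\pi'$ can be coupled to execute the same multiset of remaining queries as $\pi^*$, up to permuting the positions of $s_a$ and $s_n$, reaching identical terminal rewards. Iterating this swap yields a policy whose first phase matches $\Pi^Q$'s front-loaded queries on $\mathbb{B}^{\empty}$; once $\mathbb{B}^{\empty} \subseteq K_{\neg\mathcal{I}}$, the residual problem is by construction exactly $\hat{\mathcal{M}}^Q$, on which $\hat{P}^Q$ is optimal, so the composed policy is $\Pi^Q$.

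The main obstacle is making the exchange step rigorous under discounting. Advancing the $s_n$ query pulls its negative cost to an earlier time step, which in isolation would reduce value, but it also enables an early termination in the ``$1$'' branch that saves an entire chain of subsequent query costs. I would use the magnitude condition $|C^Q| > p_{\mathcal{I}_{\max}}$, together with $\gamma$ being close to $1$, to ensure that the per-swap saved query costs dominate any redistribution of discount factors, so the swap is a monotone improvement at every step and the final composed policy inherits optimality.
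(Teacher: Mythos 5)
Your overall strategy is the same as the paper's: show that querying non-achievable bottlenecks first is never worse (because success can only ever be certified once every state in $\mathbb{B}^{\empty}$ has been resolved as ``not a subgoal'', while an ``is a subgoal'' answer for such a state terminates immediately), and then hand off to the optimal policy of the pruned MDP. Your structural facts (no $\mathcal{I}'\in\mathbb{I}$ meets $\mathbb{B}^{\empty}$, hence consequences (a) and (b)) are correct and are exactly the facts the paper's sketch leans on.

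The gap is in the exchange step, and it is not merely a matter of rigor: the path-wise coupling claim you make is false. On the ``$s_n$ returns $0$'' branch you assert that $\pi'$ executes the same multiset of queries as $\pi^*$ up to permuting $s_a$ and $s_n$, with identical terminal rewards. But $\pi^*$ is not obliged to ever query $s_n$ on that branch: its continuation can hit a \emph{failure} absorbing state (some other answer makes $K_{\mathcal{I}}$ unachievable) before $s_n$ is ever touched, and on such realizations $\pi'$ has paid one query more than $\pi^*$. Concretely, with $\mathbb{B}=\{s_n,a_1,a_2\}$, $\mathbb{I}=\{\{a_1\},\{a_2\}\}$, and $a_1$ already known to be a subgoal, the realization ($s_n\notin\mathcal{I}_G$, $a_2\in\mathcal{I}_G$) gives $\pi^*$ (query $a_2$ first) one query and $\pi'$ two; symmetrically, your ``$\pi^*$ strictly loses'' claim on the ``$s_n$ returns $1$'' branch fails when $\pi^*$'s own first query triggers immediate failure (here the two policies tie overall). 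So the swap is only a weak improvement \emph{in expectation}, with the loss on some $0$-branch realizations offset by savings on the $1$-branch, and that balancing — which is where the paper's sketch points when it says any query skipping $s_n$ leaves at least one outcome with a guaranteed future query, whereas the $s_n$-query removes a future query from every outcome — is precisely the part your argument skips by claiming path-wise equality. Relatedly, your proposed fix for discounting (``$\gamma$ close to $1$'') is not available as stated: $\gamma$ is a fixed parameter of $\mathcal{M}^Q$ and optimality is claimed for that MDP, so the argument must work with the given $\gamma$ and the condition $|C^Q| > p_{\mathcal{I}_{max}}$ (which, as the paper notes, only ensures the secondary reward never trades off against query count), not with a limit you are free to take. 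As written, the claimed monotone per-swap improvement does not hold realization-by-realization, so the induction on swaps does not go through without the expectation-level accounting.
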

\begin{proof}[Proof Sketch]
    The primary proof for the above statement relies on establishing the fact that in a non-absorbing state for   $\mathcal{M}^Q$, the cost of querying a non-achievable bottleneck state is always going to cheaper than or equal to the cost of a query about a state that is part of some achievable subset of bottlenecks. 
    
    This can be shown by the fact that any query not involving a non-achievable state must involve at least one outcome, with at least one future query guaranteed to be required. 
    This guarantee follows from two facts. For such a query, at least one of the outcomes must contain a potentially achievable subset. Without such an outcome, the original query state would be unachievable and thus an absorbing state. As for the at least one guaranteed future query comes from the fact that since this query skipped over a non-achievable bottleneck, the achievability of the outcome can only be established after resolving whether or not the non-achievable bottleneck is part of the human implicit subgoal.

    Now, on the other hand, there can, at most, be one outcome where further querying is possibly required. In this case, future querying is also not guaranteed because, after the removal of the unachievable state, it could just result in a query state that corresponds to an absorber state for an achievable subset. 
    
    In other words, the query about a non-achievable bottleneck is always guaranteed to remove a future query from all outcomes. But for bottleneck states that can be achieved under some policy, we are guaranteed that we need to query about the non-achievable bottleneck at some point in the future. 

    Finally, there order in which the non-achievable states need to be queried doesn't matter as its not part of any achievable subsets and thus the order doesn't affect how any of the potentially achievable subsets can be queried. Finally, the secondary objective doesn't really affect this order, as it relates to reducing the expected number of queries and the secondary objective is dominated by the cost of the number of queries.  
\end{proof}

\section{Related Work}
Our work intersects with three primary areas of research: reward misspecification, planning with different world models (including insights from Theory of Mind), and query mechanisms in the context of assistance.
\subsubsection{Reward Misspecification}
The challenge of aligning AI systems with human values and intentions has been a growing concern in recent years. Reward misspecification, where the specified reward function does not fully capture the user's true objectives, has been identified as a critical issue in this domain. \cite{Hadfield2017} introduced the concept of inverse reward design, which aims to infer the true objective function from an observed reward function. 
This work highlights the importance of considering the context in which rewards are specified. Building on this, \cite{Majumdar2017RisksensitiveIR} proposed a risk-sensitive inverse reinforcement learning framework, addressing uncertainties in the reward function. More recently, \cite{shah2018the} explored the idea of preferences implicit in the design of the reward function, introducing methods to infer these implicit preferences. 
Similarly, \cite{gleave2021quantifying} investigated the problem of reward hacking, where agents exploit misspecified rewards in unexpected ways.
A significant contribution to this area is the work of \cite{Sreedharan2024HandlingRM}. This research directly addresses the challenge of reward misspecification when there is a discrepancy between the agent's and the user's expectations of the task or environment. The authors propose a framework that not only identifies potential misalignments in the reward function but also considers the differences in model understanding between the agent and the user. This approach is particularly relevant to our work as it bridges the gap between reward misspecification and model reconciliation. Our work extends these ideas by specifically focusing on the identification of unspecified subgoals, which may arise due to differences between the user's and agent's understanding of the task model.

\subsubsection{Planning with Different World Models}
The problem of agents and humans having different world models has been explored in various contexts within AI planning and human-robot interaction. \cite{chakraborti2017} introduced the concept of model reconciliation in the context of explainable planning. Their work focuses on generating explanations that bridge the gap between the agent's and the human's model of the world. This line of research was further developed by \cite{Sreedharan2018HierarchicalEM}, who explored the generation of minimal explanations for model reconciliation.
In a related vein, \cite{Bobu2018LearningUM} investigated the problem of learning from corrections in domains where the human and robot have different feature spaces. Recent work aims to address the representation misalignment \cite{peng2024adaptive}. Their work highlights the importance of considering model differences in human-robot interaction scenarios.
More specific to planning, \cite{Keren_Gal_Karpas_2014} introduced the notion of goal recognition design, which involves modifying the environment to facilitate goal recognition. While not directly addressing model differences, this work shares our interest in identifying key states that influence goal achievement.
Our approach builds upon these ideas by using bottleneck states as candidates for potential implicit goal sets, thereby addressing the challenge of planning in the presence of model differences and unspecified goals.
Research on Theory of Mind provides complementary insights into how agents plan with different world models. \cite{Ho2021CognitiveSA} highlight how humans use abstract, structured causal models of others' mental states to plan interventions effectively. Their work shows that Theory of Mind is particularly crucial when agents have different mental models, as in teaching or pragmatic communication scenarios. These insights complement approaches to model reconciliation - while current AI systems focus on explicit explanations to bridge model differences, humans appear to handle such differences through abstract representations that support efficient search over possible interventions \cite{Ho2022PlanningWT}.
This suggests that incorporating Theory of Mind-inspired representations could enhance AI systems' ability to plan effectively when faced with model differences.

By combining insights from both reward misspecification and model reconciliation research, our work offers a novel approach to generating value-aligned behavior in scenarios where the agent's and user's models may differ.

\subsubsection{Query Mechanisms and POMDP for assistance}
Query mechanisms enable robots to actively seek clarification or additional information from human operators when confronted with ambiguous or incomplete reward signals. Existing research in this area explores aspects such as the robot's partial observability of the human model \cite{Fern2007ADM}, learning a policy that aligns with the user's true goals \cite{Ng1999PolicyIU}, and the inherent ambiguity in defining a task through a reward function \cite{Abel2021OnTE}.
\cite{Fern2007ADM} introduce a decision-theoretic framework for intelligent assistance systems, focusing on Hidden Goal Markov Decision Processes (HGMDPs) as a model for selecting assistive actions. They conceptualize the assistance problem as a POMDP where the agent's goal is hidden from the assistant. This model enables the assistant to reason under uncertainty about the user’s objectives, providing a structured approach to selecting actions that account for varying task models and user preferences.

Reward shaping in reinforcement learning offers valuable formalisms for guiding learning without altering optimal policies. \cite{Ng1999PolicyIU}'s work established potential-based shaping functions, demonstrating their necessity and sufficiency for guaranteeing policy invariance. This principle is particularly relevant to designing effective query strategies. By leveraging these shaping functions, robots can refine their understanding of ambiguous tasks through queries without compromising the integrity of the learned policy.

\cite{Abel2021OnTE} investigate the expressivity of Markov reward functions in reinforcement learning, establishing a theoretical foundation for understanding the limits of reward-based task specification. 
They propose three abstract notions of "task" and
prove the existence of environment-task pairs for each task type that no Markov reward function can capture, highlighting fundamental limitations in using rewards to specify arbitrary tasks. 
These findings are relevant to designing query mechanisms that efficiently assess task expressibility. The identified limitations in reward expressivity underscore the potential need for more sophisticated query mechanisms in assistance scenarios, particularly when capturing complex task specifications.

Query mechanisms are also central to hierarchical reinforcement learning approaches. \cite{Nguyen2021LearningWA} employ a POMDP framework, using query mechanisms to ask humans about potential subgoals in a task. Extending this work, \cite{Zheng2023ProgressivelyEL} explores the use of natural language to facilitate learning across different environments. In their approach, the agent proactively communicates its planned actions and seeks feedback from humans on the quality of those plans. Their goal is to optimize language abstractions, minimizing the number of queries needed and thus reducing long-term communication efforts. However, in their approach, the human has full visibility of the robot's task model. Additionally, there can be mismatches in vocabulary—for example, the robot's understanding of ``bake a pork" might differ from the human's interpretation. We aim to address these discrepancies by designing a system that learns to query, taking into account that the human may have an incomplete or inaccurate understanding of the robot's task model.


\section{Evaluation}

We evaluate our approach on a set of standard Markov Decision Process (MDP) benchmarks.

\subsubsection{Environments}
The base environment is a Maze, a basic setting where agents navigate a grid with randomly placed obstacles, providing our baseline scenario with simple navigation challenges and clear bottleneck states at narrow passages. Building on this, Four-Rooms extends Maze by dividing it into four quadrants connected by doorways. The fixed room structure with randomized door placements creates natural bottlenecks, testing our method's ability to identify critical transition points. PuddleWorld introduces additional complexity by adding puddles that incur penalties when traversed. This environment forces trade-offs between path length and safety, creating interesting bottleneck scenarios where avoiding puddles competes with finding shortest paths. Finally, RockWorld features two types of rocks - valuable rocks that provide rewards when collected and dangerous rocks that incur penalties. This tests bottleneck identification in scenarios with resource management and risk-reward trade-offs.

\subsubsection{Methodology}
We conduct parallel experiments across different configurations for each environment\footnote{For reproducibility, the code and implementation details will be available on GitHub. The repository includes instructions to replicate all experiments.}. Our framework runs multiple independent trials with varying parameters including grid sizes chosen to balance complexity and tractability, obstacle percentages selected to ensure feasible paths while creating meaningful navigation challenges, and different numbers of human models to test scalability with varying levels of preference diversity. Each model uses a unique obstacle seed to ensure statistically independent trials. We set a query threshold of 1000 and run 3 trials per configuration.

\subsubsection{Results}
Our analysis reveals significant performance improvements across environments. For 4×4 grids, Strategic Query showed particularly strong results in Four Rooms ($p<0.001$) and Rocks ($p=0.012$), with query counts reduced from 3.7-4.8 (Query-All) to 2.0-3.3 queries, achieving reductions of 22-41\%. Query times remained efficient at 2-3 seconds. In 6×6 environments, while pruning times increased, efficiency gains persisted with query counts of 3.2-4.3 versus 3.7-7.7 for Query-All, yielding 13-40\% reductions, though with marginally significant differences (Puddle: $p=0.053$, Rocks: $p=0.073$).

Analysis across 20 human models with 10\% obstacle density demonstrates consistent performance, particularly in basic grid environments (35.6\% reduction). More complex environments like Rocks and Puddle maintained substantial improvements (26-33\% reduction). The stability of these improvements across varying model counts suggests robust scalability. Notably, Four Rooms showed the strongest statistical significance in 4×4 grids ($p<0.001$) while maintaining performance benefits in larger environments, albeit with reduced statistical significance ($p=0.411$ for $6×6$) (Figure: \ref{fig:results}).
\begin{figure}[t]
    \begin{center}
    \setlength{\tabcolsep}{4pt}
    \begin{minipage}[t]{0.45\textwidth}
        \includegraphics[width=\linewidth]{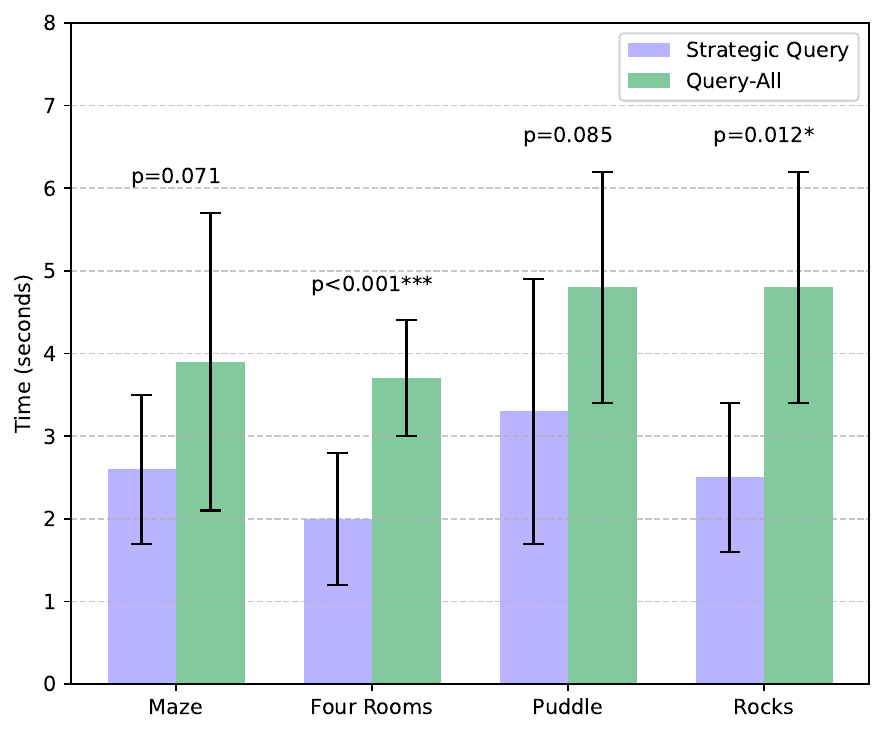}
    \end{minipage}
    \begin{minipage}[t]{0.45\textwidth}
        \small
        \centering
        \begin{tabular}{lcccc}
        \toprule
        Domain & Grid & Str. & Query & Red. \\
         & Size & Query & All & (\%) \\
        \midrule
        Maze & (4,4) & 2.6$\pm$0.9 & 3.9$\pm$1.8 & 22.0$\pm$30.1 \\
         & (6,6) & 3.8$\pm$1.5 & 6.8$\pm$3.9 & 35.6$\pm$33.6 \\[2pt]
        Four R. & (4,4) & 2.0$\pm$0.8 & 3.7$\pm$0.7 & 0.0$\pm$0.0 \\
         & (6,6) & 3.2$\pm$1.0 & 3.7$\pm$0.7 & 13.3$\pm$22.1 \\[2pt]
        Puddle & (4,4) & 3.3$\pm$1.6 & 4.8$\pm$1.4 & 26.7$\pm$29.7 \\
         & (6,6) & 4.3$\pm$3.3 & 7.7$\pm$4.0 & 40.7$\pm$29.6 \\[2pt]
        Rocks & (4,4) & 2.5$\pm$0.9 & 4.8$\pm$1.4 & 41.1$\pm$27.1 \\
         & (6,6) & 4.1$\pm$2.4 & 6.9$\pm$3.1 & 33.7$\pm$29.4 \\
        \bottomrule
        \end{tabular}
    \end{minipage}
    \end{center}
    \caption{On Top: Performance comparison between Strategic Query and Query-All approaches across four environments with 4×4 grid size. Results show mean execution times $\pm$ standard deviation based on 20 human preference models, 10\% obstacle density, and 3 runs per configuration with a query threshold of 1000. Table: Basic Performance Metrics showing query counts and reduction percentages.}
    \label{fig:results}
\end{figure}
Query times remained efficient at 2-3 seconds. In 6×6 environments, while pruning times increased, efficiency gains persisted with query counts of 3.2-4.3 versus 3.7-7.7 for Query-All.

Scaling to 8×8 grids further validates the effectiveness of our approach. Strategic Query maintained consistent query counts (3.2 ± 1.6) across environments while Query-All increased to 11.6 ± 2.9 (except Four Rooms: 3.2 ± 1.0), resulting in substantial reductions (71.9 ± 10.9\%) for Maze, Puddle, and Rocks environments. While pruning times increased to $\sim$308-325 seconds, the method achieved consistent speedup (1.05x) across all environments. Human bottleneck values stabilized around 5.8 ± 1.5 (Four Rooms: 1.6 ± 0.5), demonstrating reliable preference modeling despite increased environmental complexity.

Further analysis with varying parameters reveals interesting trends. With 10 human models and 0.1 obstacle density, bottleneck finding times remained consistent (1.7-1.8s) with total runtimes of 4.1-9.2s. Reducing to 5 human models improved computational efficiency (0.8-0.9s bottleneck finding, 2.1-4.0s total runtime). However, increasing obstacle density to 0.15 significantly impacted performance, particularly in environments like Maze and Rocks (27.4s and 13.7s bottleneck finding respectively), with total runtimes increasing to 13.1-50.7s and higher human bottleneck values (3.3-4.2). This demonstrates the method's stability with increased human models but sensitivity to environmental complexity at higher obstacle densities.

\section{Discussion}\label{sec:disc}
The paper presents a way a planning system can identify hidden subgoals of users, even when the human model may not be exactly known. We present algorithms to both identify potential candidates and generate an optimal number of queries.
We evaluate the effectiveness of the proposed method on a set of standard benchmark problems.
In terms of future work, one of the immediate next steps would be to run user studies. We plan to do them in realistic and everyday scenarios, possibly a robotic one, with significant population size. 
This would allow us to capture the effectiveness of our method in terms of the load placed on the humans and also test another related hypothesis.
For example, one could test whether people would be open to more queries if it significantly improves the agent efficiency.
This paper also focuses on the exact method that identifies optimal solutions.
It would be interesting to see if we could leverage approximate methods.
It would also be interesting to see if we could use other knowledge sources like pre-trained large language models, to get more information about user knowledge and preferences \cite{zhou2024establishing}.

\bibliographystyle{named}
\bibliography{ijcai24}

\end{document}